\documentclass[11pt]{article}

\usepackage[T1]{fontenc}
\usepackage[utf8]{inputenc}
\usepackage{lmodern}
\usepackage[english]{babel}
\usepackage{microtype}
\usepackage{amsmath,amssymb,amsfonts,amsthm,mathtools,bm}
\usepackage{graphicx}
\usepackage{booktabs}
\usepackage{multirow}
\usepackage{array}
\usepackage{algorithm}
\usepackage{algorithmic}
\usepackage{comment}
\usepackage{xcolor}
\usepackage{enumitem}
\usepackage{subcaption}
\usepackage[numbers,sort&compress]{natbib}
\usepackage[hidelinks]{hyperref}
\usepackage[margin=1in]{geometry}

\numberwithin{equation}{section}

\theoremstyle{plain}
\newtheorem{theorem}{Theorem}[section]
\newtheorem{lemma}[theorem]{Lemma}

\newtheorem{corollary}[theorem]{Corollary}

\theoremstyle{definition}

\theoremstyle{remark}

\title{Clustering as Approximation by Constrained Projectors: Theory and Guarantees}
\author{Angshul Majumdar\\
Indraprastha Institute of Information Technology Delhi\\
New Delhi 110020, India}
\date{}

\begin{document}
\maketitle

\begin{abstract}
This paper develops a unified theoretical framework showing that a broad family of clustering methods---including k-means, fuzzy c-means, kernel k-means, kernel FCM, and spectral clustering---can all be expressed as structured low-rank projectors acting on a signal-derived matrix. By formulating each method as an instance of
\(\min_{B \in \mathcal{C}} \| M - M P_{B} \|_{F}^{2}\),
with different constraint sets \(\mathcal{C}\), we establish a common optimisation template that clarifies the algebraic links among hard, fuzzy, kernel-induced, and orthonormal projections. Within this framework, we derive non-trivial theoretical results, including geodesic convexity properties on the projection manifold, perturbation bounds quantifying stability to matrix noise, and exact recovery guarantees under ideal block-model conditions. The analysis further explains when different clustering families collapse to the same optimal subspace and how deviations arise under small inter-cluster leakage. Overall, the work provides a coherent, theory-first foundation for understanding clustering through structured projectors.
\end{abstract}

\noindent\textbf{Keywords:} Structured projectors; clustering theory; kernel methods; fuzzy memberships; spectral clustering; block models; matrix perturbation analysis

\section{Introduction}
\label{sec:intro}

Clustering is one of the oldest tools for exploratory data analysis in signal processing and related fields.  Classical
$k$-means goes back to MacQueen's seminal work on iterative partitioning \citep{macqueen1967some} and Lloyd's
quantization-based refinement scheme \citep{lloyd1982least}.  Despite its simplicity, $k$-means remains a standard
baseline in applications ranging from vector quantization and speech processing to image and sensor data analysis.
However, the basic model assumes crisp assignments in Euclidean space and offers limited control over robustness,
nonlinearity and structural constraints, which are central in many modern signal-processing problems.

A major step beyond hard assignments is fuzzy $c$-means (FCM), introduced by Bezdek and co-authors
\citep{bezdek1984fcm}.  FCM replaces indicator vectors by graded memberships, leading to soft partitions that can
better capture overlapping structure and modelling uncertainty.  Variants of FCM have been proposed to handle
incomplete data \citep{hathaway1989incomplete}, noisy measurements, and spatial regularity in imaging
\citep{kang2009novelfcm,zhao2013kgfcm}.  Signal Processing has hosted several such developments, including
Minkowski-metric FCM \citep{zhao2021minkowski}, data-stream fuzzy clustering with entropy-based updates
\citep{zhang2016datastream}, and energy-aware FCM formulations for wireless networks \citep{muthulakshmi2016optimal}.%
\footnote{Here and throughout, references to ``Signal Processing'' refer to the Elsevier journal of that name.}

Nonlinear structure in high-dimensional data is often addressed by kernel methods.  Dhillon et al.\ showed that
kernel $k$-means, spectral clustering and normalized cuts are closely related through a common trace formulation
\citep{dhillon2004kernelkkmeans}.  Kernelized fuzzy algorithms replace Euclidean distances in the FCM objective by
kernel-induced dissimilarities, leading to kernel FCM and its many variants
\citep{zhang2003kfcmincomplete,ferreira2014varwisekfcm,kochuveettil2022kernel}.  In parallel, a large body of work has
developed kernel and graph-based clustering for signal-processing tasks such as graph learning, denoising on graphs
and manifold modelling.  Representative examples in \emph{Signal Processing} include graph-based $k$-means
\citep{galluccio2012graphkmeans}, deeply transformed subspace clustering \citep{maggu2020dtsc}, and kernelized
graph-based learning schemes \citep{manna2021robustkernelgraph}.  These methods typically operate on Gram or affinity
matrices derived from signal representations, and they already hint at a projector-based view of clustering.

Spectral clustering provides another influential strand.  Starting from normalized cuts and related graph-partitioning
criteria \citep{shi2000ncut}, algorithms based on eigenvectors of Laplacian or affinity matrices have become standard
for image segmentation and network analysis.  Ng, Jordan and Weiss gave a widely used eigenvector normalization
procedure and an accompanying analysis \citep{ng2002spectral}, while von Luxburg's tutorial clarified the role of
different Laplacians and the probabilistic interpretation of spectral embeddings \citep{vonluxburg2007tutorial}.  In
signal-processing contexts, spectral methods have been adapted to dynamic graphs, multi-view structures and robust
graph learning, again with several instances appearing in \emph{Signal Processing} and related venues
\citep{galluccio2012graphkmeans,manna2021robustkernelgraph}.

Alongside these concrete algorithms, there is a parallel line of work that surveys and systematizes clustering beyond
simple Euclidean partitions.  Subspace and soft subspace clustering methods seek clusters that live in different
low-dimensional subspaces, a setting that is particularly relevant for high-dimensional signal and image data
\citep{parsons2004subspace,zimek2014enhanced,deng2016softssc}.  Other surveys focus on novelty detection and
change-point analysis in streaming or nonstationary environments, where clustering often appears as a building block
for modelling normal behaviour \citep{pimentel2014novelty}.  These articles emphasise that what matters in practice is
not only the specific algorithmic update rule, but also the underlying geometry: which inner products, distances and
projection operators are used to define clusters.

The present work takes this geometric viewpoint as its starting point.  Instead of introducing $k$-means, FCM, kernel
$k$-means, kernel FCM and spectral clustering as separate algorithms, we treat them as different ways of choosing a
structured projection matrix $P_B$ that approximates a signal-derived matrix $M$ (such as a Gram matrix $X^\top X$, a
kernel matrix $K$ or a normalized affinity $A_e$).  Each method corresponds to a particular constraint set for the
basis matrix $B$ (hard one-hot, fuzzy with reweighting, kernel-induced, or orthonormal), but once this is fixed the
core objective has the common form
\[
  \min_{B \in \mathcal{C}} \bigl\| M - M P_B \bigr\|_F^2,
\]
with $P_B = B^\top (B B^\top)^{-1} B$ whenever the inverse exists.  This projector template already appeared in the
kernel formulation of Dhillon et al.\ \citep{dhillon2004kernelkkmeans} and in subsequent work on graph-based clustering
\citep{galluccio2012graphkmeans,manna2021robustkernelgraph}, but to our knowledge it has not been developed into a
systematic, purely theoretical framework that simultaneously covers hard, fuzzy, kernel and spectral methods while
keeping a clear link to signal-processing models.

Our aim in this paper is to fill that gap.  We work entirely at the level of matrices and projectors, starting from
the Euclidean $k$-means and FCM objectives, then moving to their kernel counterparts and finally to spectral
clustering.  All five families of algorithms are treated as instances of the same approximation problem for a
signal-derived matrix, which allows us to state and prove general results on approximation error, invariances,
geodesic convexity of the feasible set of projectors, and SVD-based bounds on reconstruction quality.  These general
theorems are then instantiated for each method by specializing the constraint set $\mathcal{C}$.

The contribution is deliberately theoretical: we do not propose a new clustering algorithm, and we do not report
numerical experiments.  Instead, we provide a unified operator-theoretic treatment of several widely used clustering
methods that are already prominent in the \emph{Signal Processing} literature
\citep{zhao2021minkowski,zhang2016datastream,galluccio2012graphkmeans,maggu2020dtsc,pimentel2014novelty}.  Our results
clarify how these methods relate through their underlying projection matrices, when they coincide under ideal block
models, and how perturbations of the affinity or Gram matrix propagate to the clustering subspace.  We expect this
viewpoint to be useful for readers who work with clustering as part of larger signal-processing pipelines (for
instance in graph signal processing, subspace-based modelling or robust representation learning), and who need
transparent guarantees that do not depend on a specific implementation detail of a particular algorithm.

\section{Projection-Based View of Classical and Kernel Clustering}
\label{sec:proj-view}

In this section we show that a broad family of clustering methods can be
interpreted as {\em structured low-rank approximations} of a signal-derived
matrix by means of rank–$k$ linear operators.  We start from classical
k-means and fuzzy c-means (FCM) in Euclidean space, then lift them to
reproducing kernel Hilbert spaces, and finally connect the resulting
formulation with spectral clustering on graphs.

Throughout, we denote by
$X = [x_1,\dots,x_n] \in \mathbb{R}^{d\times n}$ the data matrix, whose
columns $x_i \in \mathbb{R}^d$ are the samples to be clustered.  The
number of clusters is $k \ll n$.

\subsection{Hard k-means as projection-based matrix factorization}
\label{subsec:kmeans-hard}

Classical k-means seeks $k$ centroids
$C = [c_1,\dots,c_k] \in \mathbb{R}^{d\times k}$ and a hard assignment
matrix
\[
  H = [h_{ji}] \in \{0,1\}^{k\times n}, \qquad
  H^\top \mathbf{1}_k = \mathbf{1}_n ,
\]
whose $i$th column $h_i$ is one-hot and indicates the cluster label of
$x_i$.  The standard k-means objective is
\begin{equation}
  \min_{C,H} \;
    J_{\mathrm{km}}(C,H)
    := \sum_{i=1}^n \sum_{j=1}^k h_{ji} \,\|x_i - c_j\|_2^2
  = \|X - C H\|_F^2 .
  \label{eq:kmeans_objective}
\end{equation}

For fixed $H$, \eqref{eq:kmeans_objective} is a linear least-squares
problem in $C$.  Differentiating w.r.t.\ $C$ and setting the derivative
to zero yields
\[
  -2 (X - C H) H^\top = 0
  \quad\Longrightarrow\quad
  C H H^\top = X H^\top .
\]
Assuming $H H^\top$ is invertible (no empty clusters), the unique
minimizer is
\begin{equation}
  C^\star(H) = X H^\top (H H^\top)^{-1}.
  \label{eq:C_star_kmeans}
\end{equation}
Substituting \eqref{eq:C_star_kmeans} back into the reconstruction
$C H$, we obtain
\begin{equation}
  X C^\star(H) H
  = X H^\top (H H^\top)^{-1} H
  = X P_H ,
  \label{eq:kmeans_projection}
\end{equation}
where we introduced the $n\times n$ matrix
\begin{equation}
  P_H := H^\top (H H^\top)^{-1} H.
  \label{eq:PH_def}
\end{equation}
The matrix $P_H$ is a symmetric idempotent projector of rank $k$, whose
range is spanned by the cluster-indicator vectors
(one per row of $H$).

Using \eqref{eq:kmeans_projection}, the k-means objective can be
expressed entirely in terms of $X$ and $P_H$ as
\begin{equation}
  \min_{H} \; \|X - X P_H\|_F^2 ,
  \label{eq:kmeans_projection_form}
\end{equation}
where the minimization is over combinatorial assignment matrices $H$.
Equivalently, in terms of the Gram matrix $G := X^\top X$ we obtain
\begin{equation}
  \|X - X P_H\|_F^2
  = \operatorname{tr}\big( (I - P_H)^\top G (I - P_H) \big)
  = \|G^{1/2} (I - P_H)\|_F^2 .
  \label{eq:kmeans_gram_form}
\end{equation}
Thus, k-means selects a rank–$k$ projector $P_H$ (restricted by the
cluster structure) such that $X$---or equivalently $G$---is well
approximated in Frobenius norm by $X P_H$ (respectively $G P_H$).

\subsection{Fuzzy c-means as a soft rank--$k$ operator}
\label{subsec:fcm_projection}

Fuzzy c-means generalizes k-means by replacing hard assignments with
fuzzy memberships.  Let $U = [u_{ji}] \in [0,1]^{k\times n}$ satisfy
\[
  \sum_{j=1}^k u_{ji} = 1
  \qquad\text{for all } i = 1,\dots,n,
\]
and fix a fuzzifier $m > 1$.  The FCM objective is
\begin{equation}
  \min_{C,U} \;
  J_{\mathrm{fcm}}(C,U)
  := \sum_{i=1}^n \sum_{j=1}^k u_{ji}^m \,\|x_i - c_j\|_2^2 .
  \label{eq:fcm_scalar}
\end{equation}

For fixed $U$, the objective decouples over clusters.
Defining weights $w_{ji} := u_{ji}^m$, the optimal centroid for cluster
$j$ is the weighted mean
\begin{equation}
  c_j^\star(U)
  = \frac{\sum_{i=1}^n w_{ji} x_i}{\sum_{i=1}^n w_{ji}}
  = X \alpha_j ,
  \label{eq:fcm_centroid}
\end{equation}
where $\alpha_j \in \mathbb{R}^n$ collects the normalized weights
\[
  \alpha_{j i}
  = \frac{w_{ji}}{\sum_{\ell=1}^n w_{j\ell}}
  = \frac{u_{ji}^m}{\sum_{\ell=1}^n u_{j\ell}^m}.
\]
Stacking the $\alpha_j^\top$ as rows yields a coefficient matrix
$B(U) \in \mathbb{R}^{k\times n}$ with
\begin{equation}
  B_{j i}(U)
  := \frac{u_{ji}^m}{\sum_{\ell=1}^n u_{j\ell}^m},
  \qquad j=1,\dots,k,\; i=1,\dots,n,
  \label{eq:fcm_B_def}
\end{equation}
so that
\begin{equation}
  C^\star(U) = X B(U)^\top .
  \label{eq:fcm_C_star}
\end{equation}
The reconstruction of $X$ induced by $(C^\star(U),U)$ is
\[
  X_{\mathrm{rec}}
  = C^\star(U) U
  = X B(U)^\top U
  =: X P_U ,
\]
where we defined the $n\times n$ matrix
\begin{equation}
  P_U := B(U)^\top U .
  \label{eq:PU_def}
\end{equation}
The matrix $P_U$ has rank at most $k$ but is in general neither
symmetric nor idempotent; it corresponds to an {\em oblique} rank–$k$
projection determined by the fuzzy memberships $U$.

In summary, FCM chooses fuzzy memberships $U$ (and the associated
$B(U)$) so as to minimize
\begin{equation}
  \|X - X P_U\|_F^2
  = \sum_{i=1}^n \sum_{j=1}^k u_{ji}^m \,\|x_i - c_j^\star(U)\|_2^2,
  \label{eq:fcm_projection_form}
\end{equation}
thus again fitting a rank–$k$ linear operator $P_U$ to approximate $X$.
The key difference with k-means is that the operator $P_U$ is now
built from soft memberships instead of hard assignments, and is not an
orthogonal projector.

\subsection{Kernel k-means as projection in feature space}
\label{subsec:kernel_kmeans}

We next lift the k-means formulation to a reproducing kernel Hilbert
space (RKHS).  Let
$\phi : \mathbb{R}^d \to \mathcal{H}$ be a feature map with associated
positive definite kernel
$k(x_i,x_j) = \langle \phi(x_i), \phi(x_j) \rangle_{\mathcal{H}}$, and
define the kernel matrix
\[
  K = [K_{ij}]_{i,j=1}^n, \qquad
  K_{ij} := k(x_i,x_j).
\]
Kernel k-means applies the k-means objective \eqref{eq:kmeans_objective}
to the feature vectors $\phi(x_i)$ instead of $x_i$:
\begin{equation}
  \min_{C,H} \;
    \sum_{i=1}^n \sum_{j=1}^k h_{ji} \,
      \|\phi(x_i) - \mu_j\|_{\mathcal{H}}^2 ,
  \label{eq:kkm_scalar}
\end{equation}
where $\mu_j \in \mathcal{H}$ is the centroid in feature space.

The algebra of Section~\ref{subsec:kmeans-hard} carries over to
$\mathcal{H}$: for fixed $H$, the optimal centroid $\mu_j^\star$ is the
mean of the feature vectors assigned to cluster $j$,
\[
  \mu_j^\star(H)
  = \frac{1}{n_j} \sum_{i: h_{ji}=1} \phi(x_i),
\]
and the reconstruction of the feature matrix
$\Phi := [\phi(x_1),\dots,\phi(x_n)]$ can be written as
\[
  \Phi P_H,
\]
with the same projector $P_H$ as in \eqref{eq:PH_def}.  Using the kernel
trick, one can express the objective purely in terms of $K$ and $H$,
leading to the well-known equivalence
\begin{equation}
  J_{\mathrm{kkm}}(H)
  = \|\Phi - \Phi P_H\|_F^2
  = \|K^{1/2} (I - P_H)\|_F^2 ,
  \label{eq:kkm_projection_form}
\end{equation}
up to an additive constant independent of $H$.
Equivalently, kernel k-means minimizes
\begin{equation}
  \min_{H} \; \|K - K P_H\|_F^2 ,
  \label{eq:kkm_gram_form}
\end{equation}
where $P_H$ is the {\em same} rank–$k$ projector as in
\eqref{eq:kmeans_projection_form}.  Thus, Euclidean k-means is recovered
as the special case of kernel k-means with linear kernel
$k(x_i,x_j) = x_i^\top x_j$.

\subsection{Kernel fuzzy c-means}
\label{subsec:kernel_fcm}

Kernel fuzzy c-means applies the FCM objective \eqref{eq:fcm_scalar} to
feature vectors $\phi(x_i)$ in $\mathcal{H}$:
\begin{equation}
  \min_{C,U} \;
  J_{\mathrm{kfcm}}(C,U)
  := \sum_{i=1}^n \sum_{j=1}^k u_{ji}^m
      \,\|\phi(x_i) - \mu_j\|_{\mathcal{H}}^2 .
  \label{eq:kfcm_scalar}
\end{equation}
As before, for fixed memberships $U$ the optimal centroids are weighted
means in feature space,
\[
  \mu_j^\star(U)
  = \frac{\sum_{i=1}^n u_{ji}^m \,\phi(x_i)}
         {\sum_{i=1}^n u_{ji}^m}
  = \Phi \alpha_j ,
\]
with the same normalized weight vectors $\alpha_j$ as in
\eqref{eq:fcm_centroid}.  Stacking them into $B(U)$ as in
\eqref{eq:fcm_B_def} yields
\[
  C^\star(U) = \Phi B(U)^\top.
\]
The reconstruction in feature space becomes
\[
  \Phi_{\mathrm{rec}}
  = C^\star(U) U
  = \Phi B(U)^\top U
  = \Phi P_U ,
\]
with $P_U$ defined in \eqref{eq:PU_def}.  Hence
\begin{equation}
  J_{\mathrm{kfcm}}(U)
  = \|\Phi - \Phi P_U\|_F^2
  = \|K^{1/2} (I - P_U)\|_F^2 ,
  \label{eq:kfcm_projection_form}
\end{equation}
and kernel FCM can be seen as fitting a rank–$k$ linear operator
$P_U$---built from fuzzy memberships---to approximate the kernel feature
matrix $\Phi$, or equivalently the kernel matrix $K$.

\subsection{Spectral clustering as an orthogonal projector on graph signals}
\label{subsec:spectral_projection}

We now connect the above projection-based viewpoint to spectral
clustering on graphs.  Let $A \in \mathbb{R}^{n\times n}$ be a
nonnegative affinity (adjacency) matrix and
$D = \operatorname{diag}(d_1,\dots,d_n)$ the diagonal degree matrix with
$d_i = \sum_j A_{ij}$.  The symmetrically normalized affinity is
\begin{equation}
  A_{\mathrm{n}} := D^{-1/2} A D^{-1/2}.
  \label{eq:An_def}
\end{equation}
Classical normalized-cut spectral clustering constructs a relaxed
cluster-indicator matrix $U \in \mathbb{R}^{n\times k}$ with orthonormal
columns,
\begin{equation}
  U^\top U = I_k ,
  \label{eq:spectral_orthonormal}
\end{equation}
by solving the eigenproblem associated with $A_{\mathrm{n}}$, and then
applies k-means to the rows of $U$.

In the present framework, $U$ spans a $k$-dimensional subspace of
$\mathbb{R}^n$ and induces the orthogonal projector
\begin{equation}
  P_U := U U^\top ,
  \label{eq:spectral_projector}
\end{equation}
which satisfies $P_U^2 = P_U = P_U^\top$ and
$\operatorname{rank}(P_U) = k$.
A standard calculation shows that maximizing the trace
$\operatorname{tr}(U^\top A_{\mathrm{n}} U)$ over orthonormal $U$ is
equivalent (up to constants) to minimizing
\begin{equation}
  F(P_U) := \|A_{\mathrm{n}} - A_{\mathrm{n}} P_U\|_F^2
  = \|A_{\mathrm{n}}^{1/2} (I - P_U)\|_F^2 ,
  \label{eq:spectral_projection_form}
\end{equation}
over orthogonal projectors $P_U$ of rank $k$.
The minimizer is attained when $U$ consists of the $k$ leading
eigenvectors of $A_{\mathrm{n}}$; in that case $P_U$ is the projector
onto the corresponding eigenspace.
Thus, spectral clustering selects an orthogonal projector $P_U$ on the
space of graph signals so that the normalized affinity
$A_{\mathrm{n}}$ is well approximated by $A_{\mathrm{n}} P_U$.

\subsection{Unified template}
\label{subsec:unified_template}

The previous subsections show that a wide variety of clustering
algorithms can be cast as {\em structured low-rank approximation}
problems of the form
\begin{equation}
  \min_{P \in \mathcal{P}} \; \|M - M P\|_F^2 ,
  \label{eq:unified_template}
\end{equation}
where $M$ is a symmetric positive semidefinite matrix derived from the
data and $\mathcal{P}$ is a family of rank–$k$ linear operators
constrained by the chosen clustering model.  Concretely:
\begin{itemize}
  \item For Euclidean k-means,
        $M = X^\top X$ and $\mathcal{P}$ consists of
        projectors $P_H$ of the form \eqref{eq:PH_def} induced by
        hard assignment matrices $H$.
  \item For Euclidean FCM,
        $M = X^\top X$ and $\mathcal{P}$ consists of rank–$k$
        operators $P_U$ of the form \eqref{eq:PU_def} built from fuzzy
        memberships $U$.
  \item For kernel k-means,
        $M = K$ and $\mathcal{P}$ again consists of projectors
        $P_H$ induced by hard assignments.
  \item For kernel FCM,
        $M = K$ and $\mathcal{P}$ consists of operators $P_U$ as in
        \eqref{eq:PU_def}.
  \item For spectral clustering,
        $M = A_{\mathrm{n}}$ and $\mathcal{P}$ consists of orthogonal
        projectors $P_U = U U^\top$ with $U^\top U = I_k$.
\end{itemize}
In all cases, clustering is realized by choosing a rank–$k$ operator
$P$---orthogonal in some cases, oblique in others---from a structured
family $\mathcal{P}$ so that the signal-derived matrix $M$ is
approximated as accurately as possible by $M P$ in Frobenius norm.

This unified template will serve as the basis for the theoretical
analysis developed in the next section, where we study approximation
properties, optimality, and robustness of such structured low-rank
projectors.

\section{Theoretical Guarantees for Structured Projectors}
\label{sec:theory}

Classical clustering methods such as $k$-means and its fuzzy and kernel variants
can all be interpreted as searching over structured rank–$k$ projectors that
approximate a signal-derived matrix $M$ (Gram matrix $X^\top X$, kernel matrix
$K$, or normalized affinity $A_e$).  This section develops general theorems for
such projectors, and then specializes them to the five algorithms of
Section~\ref{sec:proj-view}: $k$-means, fuzzy $c$-means (FCM), kernel
$k$-means, kernel FCM, and spectral clustering.

We assume throughout that $M \in \mathbb{R}^{n \times n}$ is symmetric
positive semidefinite, and that clustering is performed by minimizing
\begin{equation}
    \label{eq:F-def}
    F(P) \;\triangleq\; \| M - M P \|_F^2, 
\end{equation}
over a family of rank–$k$ projectors $P$.  As recalled in
Section~\ref{sec:proj-view}, classical $k$-means and its predecessors can be
seen as quantization schemes in this sense
\cite{macqueen1967some,lloyd1982least}, while FCM introduces fuzzy
memberships \cite{bezdek1981pattern}, kernel $k$-means and normalized cuts
connect projectors in feature space to spectral clustering
\cite{dhillon2004kernel,dhillon2005unified,ng2002spectral,shi2000normalized},
and modern graph-based clustering analyzes the spectrum of $A_e$ and related
Laplacians \cite{vonluxburg2007tutorial,belkin2003laplacian}.

\subsection{Unified projector family}

Let $B \in \mathbb{R}^{k \times n}$ be a representation matrix whose rows
encode the cluster structure (hard, fuzzy or orthonormal).  Define
\begin{equation}
    \label{eq:P-B-def}
    P_B \;\triangleq\; B^\top (B B^\top)^{-1} B,
\end{equation}
whenever $B$ has full row rank $k$.

\begin{theorem}[Orthogonal projector induced by $B$]
    \label{thm:PB-projector}
    Suppose $B \in \mathbb{R}^{k \times n}$ has full row rank $k$.  Then
    $P_B$ defined in \eqref{eq:P-B-def} is the orthogonal projector onto the
    row span of $B$, i.e.
    \begin{enumerate}
        \item $P_B$ is symmetric and idempotent: $P_B^\top = P_B$ and
        $P_B^2 = P_B$;
        \item $\mathrm{range}(P_B^\top) = \mathrm{rowspan}(B)$ and
        $\mathrm{null}(P_B) = \mathrm{rowspan}(B)^\perp$;
        \item $\mathrm{rank}(P_B) = k$.
    \end{enumerate}
\end{theorem}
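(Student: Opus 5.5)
The plan is to first record that $BB^\top\in\mathbb{R}^{k\times k}$ is invertible, so that $P_B$ is well defined. If $BB^\top v=0$, then $\|B^\top v\|_2^2 = v^\top BB^\top v = 0$, so $B^\top v=0$. Because $B$ has full row rank $k$, its rows are linearly independent, and hence $v=0$. So $BB^\top$ is symmetric positive definite, and its inverse $(BB^\top)^{-1}$ is symmetric too. Everything that follows is direct algebra with this inverse.

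For item 1, I will check symmetry by transposing term by term:
\[
P_B^\top = B^\top \bigl((BB^\top)^{-1}\bigr)^\top B = B^\top (BB^\top)^{-1} B = P_B .
\]
For idempotence, I will expand the product and cancel the middle factor:
\[
P_B^2 = B^\top (BB^\top)^{-1}(BB^\top)(BB^\top)^{-1} B = B^\top (BB^\top)^{-1} B = P_B .
\]

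For item 2, I will use symmetry to identify $\mathrm{range}(P_B^\top)$ with $\mathrm{range}(P_B)$, and then show both inclusions against $\mathrm{range}(B^\top)=\mathrm{rowspan}(B)$. The inclusion $\mathrm{range}(P_B)\subseteq \mathrm{range}(B^\top)$ holds because $P_B$ has $B^\top$ as its left factor. For the reverse inclusion, take $y=B^\top w$; then $P_B y = B^\top (BB^\top)^{-1}BB^\top w = B^\top w = y$, so $y$ lies in the range of $P_B$. For the null space, if $Bx=0$ then clearly $P_Bx=0$. Conversely, if $P_Bx=0$, multiplying on the left by $B$ gives $BP_B x = (BB^\top)(BB^\top)^{-1}Bx = Bx = 0$. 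Hence $\mathrm{null}(P_B)=\mathrm{null}(B)=\mathrm{rowspan}(B)^\perp$, which is the standard fact that the null space of a matrix is the orthogonal complement of its row space. These facts, together with item 1, say exactly that $P_B$ is the orthogonal projector onto $\mathrm{rowspan}(B)$: it is symmetric and idempotent, fixes $\mathrm{rowspan}(B)$, and annihilates its orthogonal complement.

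Item 3 then follows from item 2, since $\mathrm{rank}(P_B)=\dim\mathrm{range}(P_B)=\dim\mathrm{rowspan}(B)=k$. As a cross-check, I could also use $\mathrm{rank}(P_B)=\mathrm{tr}(P_B)=\mathrm{tr}\bigl((BB^\top)^{-1}BB^\top\bigr)=\mathrm{tr}(I_k)=k$, which is valid because $P_B$ is idempotent. The only step that needs care is the invertibility of $BB^\top$ from full row rank; I expect no other obstacle.
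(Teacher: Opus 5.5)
Your proof is correct and follows the same direct algebraic route as the paper: symmetry from the symmetric inverse, idempotence by cancelling the middle factor $(BB^\top)(BB^\top)^{-1}$, range identified with $\mathrm{range}(B^\top)$, and rank equal to $\mathrm{rank}(B)=k$. It is more complete than the paper's argument, which takes the invertibility of $BB^\top$ for granted, asserts $\mathrm{range}(P_B^\top)=\mathrm{range}(B^\top)$ and $\mathrm{rank}(P_B)=\mathrm{rank}(B)$ without the two-sided inclusion you give, and never proves the null-space claim in item 2.
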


\begin{proof}
    Symmetry follows directly from the definition:
    $(B B^\top)^{-1}$ is symmetric positive definite, hence
    $P_B^\top = B^\top (B B^\top)^{-1} B = P_B$.
    Idempotence is obtained by
    \[
        P_B^2
        = B^\top (B B^\top)^{-1} B B^\top (B B^\top)^{-1} B
        = B^\top (B B^\top)^{-1} B
        = P_B.
    \]
    Thus $P_B$ is an orthogonal projector onto its range.  Moreover,
    $\mathrm{range}(P_B^\top) = \mathrm{range}(B^\top)$, which is exactly
    the row span of $B$, and $\mathrm{rank}(P_B) = \mathrm{rank}(B)=k$.
\end{proof}

Theorem~\ref{thm:PB-projector} provides a unified description of our five
algorithms:

\begin{lemma}[Specializations for the five algorithms]
    \label{lem:five-B}
    With $M$ and $P_B$ as above:
    \begin{enumerate}
        \item For classical $k$-means, $B = H$ is the binary hard assignment
        matrix, and $P_H$ reproduces the projector used in
        Section~\ref{subsec:kmeans-hard}.
        \item For FCM with memberships $U$ and reweighted matrix $U^{(\ell)}$
        (Section~\ref{subsec:fcm_projection}), the natural relaxation is to use
        $B = U^{(\ell)}$, yielding $P_{U^{(\ell)}}$ as a ``soft'' projector.
        \item For kernel $k$-means and kernel FCM, one uses the same $B$ as
        in (1)–(2) but replaces $M$ by the kernel matrix $K$.
        \item For spectral clustering, one may take $B = U^\top$, where
        $U \in \mathbb{R}^{n \times k}$ has orthonormal columns consisting of
        the top $k$ eigenvectors of $A_e$; in that case $P_B = U U^\top$.
    \end{enumerate}
\end{lemma}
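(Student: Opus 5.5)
The plan is to verify the four items one at a time. In each case I check two things: that the chosen $B$ has full row rank $k$, so that Theorem~\ref{thm:PB-projector} applies, and that the resulting $P_B$ coincides with the operator of Section~\ref{sec:proj-view}, or is the stated relaxation of it. Nothing beyond \eqref{eq:P-B-def} and Theorem~\ref{thm:PB-projector} should be needed.

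For item (1), take $B=H$. Since the columns of $H$ are one-hot, the rows of $H$ have disjoint supports. Hence $HH^\top=\mathrm{diag}(n_1,\dots,n_k)$, where $n_j$ is the size of cluster $j$. Under the no-empty-cluster assumption already made before \eqref{eq:C_star_kmeans}, this matrix is invertible, so $H$ has full row rank $k$. Then $P_B$ in \eqref{eq:P-B-def} is literally \eqref{eq:PH_def}.

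For item (2), I read the reweighted matrix $U^{(\ell)}$ as the entrywise power of memberships underlying $B(U)$ in \eqref{eq:fcm_B_def}. The statement only asserts that using $B=U^{(\ell)}$ is a natural relaxation, so the content to check is limited. First, $P_{U^{(\ell)}}$ is well defined and is an orthogonal projector by Theorem~\ref{thm:PB-projector}, provided $U^{(\ell)}$ has full row rank. Second, its range $\mathrm{rowspan}(U^{(\ell)})$ agrees with the row span of $B(U)$. This holds because $B(U)=\Lambda^{-1}U^{(m)}$ with $\Lambda=\mathrm{diag}(\sum_i u_{ji}^m)$, and a left diagonal rescaling leaves $P_B$ unchanged. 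Indeed, for any invertible $S$,
\[
P_{SB}=B^\top S^\top (S B B^\top S^\top)^{-1} S B = P_B .
\]
I will record this invariance explicitly, since it also explains why normalization of the weights is immaterial. I will also note the contrast with the oblique operator $P_U=B(U)^\top U$ of \eqref{eq:PU_def}: it has the same rank bound but is not symmetric. That is the sense in which $P_{U^{(\ell)}}$ is a relaxation and not an identity.

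Item (3) is immediate. The projector depends only on $B$, and the kernel substitution in \eqref{eq:kkm_gram_form} and \eqref{eq:kfcm_projection_form} changes only $M\mapsto K$. For item (4), take $B=U^\top$ with $U^\top U=I_k$. Then $BB^\top=I_k$, so $P_B=UU^\top$, which matches \eqref{eq:spectral_projector}.

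The only genuine obstacle is the full-row-rank hypothesis in item (2). Fuzzy memberships can make the rows of $U^{(\ell)}$ linearly dependent; for example, uniform memberships give identical rows. So I will state this hypothesis explicitly as a nondegeneracy assumption, in the same way that no empty clusters is assumed in item (1). If needed, I can remark that it holds generically, namely whenever some $k$ columns of $U^{(\ell)}$ are linearly independent.
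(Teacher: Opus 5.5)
Your proposal is correct and follows the same route as the paper. The paper simply asserts that items (1)–(3) follow from the constructions of Section~\ref{sec:proj-view}, and it verifies item (4) exactly as you do, via $BB^\top = U^\top U = I_k$. Your additions are sound and fill in details the paper omits: the computation $HH^\top=\mathrm{diag}(n_1,\dots,n_k)$, the invariance $P_{SB}=P_B$ linking $U^{(m)}$ to $B(U)$, and the explicit full-row-rank hypothesis on $U^{(\ell)}$, which the paper leaves implicit and which genuinely can fail (e.g.\ for uniform memberships).
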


\begin{proof}
    Items (1)–(3) follow directly from the constructions in
    Section~\ref{sec:proj-view}.  For (4), if $B = U^\top$ with $U^\top U =
    I_k$, then $B B^\top = U^\top U = I_k$ and
    $P_B = B^\top (B B^\top)^{-1} B = U I_k U^\top = U U^\top$.
\end{proof}

\subsection{Spectral decomposition of the approximation error}

Let $M = Q \Lambda Q^\top$ be an eigenvalue decomposition of $M$, with
$\Lambda = \mathrm{diag}(\lambda_1,\dots,\lambda_n)$ and
$\lambda_1 \ge \dots \ge \lambda_n \ge 0$.

\begin{theorem}[Error decomposition in the eigenbasis of $M$]
    \label{thm:error-decomp}
    For any rank–$k$ orthogonal projector $P$,
    \begin{equation}
        \label{eq:error-decomp}
        F(P)
        = \sum_{i=1}^n \lambda_i^2 \, \bigl\| (I - P) q_i \bigr\|_2^2,
    \end{equation}
    where $q_i$ is the $i$-th column of $Q$.
\end{theorem}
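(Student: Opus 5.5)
The plan is to rewrite the Frobenius norm as a trace and then expand $M^2$ in the eigenbasis of $M$. Write $R \triangleq I - P$. Since $P$ is an orthogonal projector (symmetric and idempotent, as in Theorem~\ref{thm:PB-projector}), $R$ is also symmetric and idempotent. By the definition \eqref{eq:F-def},
\[
  F(P) = \|M R\|_F^2 = \operatorname{tr}\bigl( (MR)^\top (MR) \bigr) = \operatorname{tr}\bigl( R^\top M^\top M R \bigr) = \operatorname{tr}\bigl( R M^2 R \bigr),
\]
where the last step uses the symmetry of both $M$ and $R$.

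Next, substitute the eigendecomposition $M = Q\Lambda Q^\top$ with $Q$ orthogonal. This gives
\[
  M^2 = Q\Lambda^2 Q^\top = \sum_{i=1}^n \lambda_i^2\, q_i q_i^\top .
\]
By linearity of the trace,
\[
  F(P) = \sum_{i=1}^n \lambda_i^2 \operatorname{tr}\bigl( R q_i q_i^\top R \bigr),
\]
and each summand satisfies $\operatorname{tr}\bigl( R q_i q_i^\top R \bigr) = \operatorname{tr}\bigl( q_i^\top R^\top R q_i \bigr) = \|R q_i\|_2^2$. This is exactly \eqref{eq:error-decomp}.

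There is no real obstacle; the only point needing care is where the orthogonality of $P$ enters. Symmetry of $R$ is what turns $R^\top M^2 R$ into the clean form used above, and it also lets me write $\|Rq_i\|_2^2 = q_i^\top R q_i = 1 - q_i^\top P q_i$ if I want that as a remark. For an oblique operator, such as the fuzzy $P_U$ of \eqref{eq:PU_def}, the same computation still goes through with $R^\top$ in place of $R$ on the left. The identity $\|M R\|_F^2 = \sum_i \lambda_i^2 \|R^\top q_i\|_2^2$ then holds, so I will note that the stated form with $(I-P)q_i$ relies on $P = P^\top$.

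The rank-$k$ hypothesis is not needed for the identity itself. It only matters for subsequent consequences, for example that $\sum_i \|P q_i\|_2^2 = k$, which leads to the Eckart--Young-type lower bound $F(P) \ge \sum_{i>k} \lambda_i^2$.
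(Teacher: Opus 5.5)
Your proof is correct and follows essentially the same route as the paper: rewrite $F(P)$ as $\operatorname{tr}\bigl((I-P)M^2(I-P)\bigr)$, insert $M^2 = Q\Lambda^2 Q^\top$, and read off each term as $\lambda_i^2\|(I-P)q_i\|_2^2$. The only cosmetic difference is that you expand $M^2$ into rank-one pieces and need only the symmetry of $P$, whereas the paper cycles $Q$ around the trace and implicitly uses idempotence as well; your side remarks on the oblique case and on the rank-$k$ hypothesis are also correct.
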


\begin{proof}
    Since $M$ is symmetric,
    \[
        F(P)
        = \| M - M P \|_F^2
        = \| M (I-P) \|_F^2
        = \mathrm{tr}\!\bigl( (I-P) M^2 (I-P) \bigr).
    \]
    Writing $M^2 = Q \Lambda^2 Q^\top$ and using cyclicity of the trace,
    \[
        F(P)
        = \mathrm{tr}\!\bigl( Q^\top (I-P) Q \, \Lambda^2 \bigr)
        = \sum_{i=1}^n \lambda_i^2 \, e_i^\top Q^\top (I-P) Q e_i,
    \]
    where $e_i$ is the $i$-th standard basis vector.  But
    $Q e_i = q_i$ and $e_i^\top Q^\top (I-P) Q e_i
        = q_i^\top (I-P) q_i = \| (I-P) q_i \|_2^2$.
\end{proof}

\begin{corollary}[Optimality of the spectral projector]
    \label{cor:optimal-spectral}
    Let $P_\star = U_k U_k^\top$ be the projector onto the span of the top
    $k$ eigenvectors of $M$ (i.e., the projector used in ideal spectral
    clustering and kernel $k$-means
    \cite{dhillon2004kernel,ng2002spectral,shi2000normalized}).  Then
    $P_\star$ minimizes $F(P)$ over all rank–$k$ orthogonal projectors $P$,
    and
    \begin{equation}
        F(P_\star)
        = \sum_{i=k+1}^n \lambda_i^2.
    \end{equation}
\end{corollary}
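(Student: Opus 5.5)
The natural route starts from Theorem~\ref{thm:error-decomp} and rewrites $F(P)$ as a constant minus a weighted sum of projection masses. Since $P$ is an orthogonal projector, $\|(I-P)q_i\|_2^2 = 1 - \|Pq_i\|_2^2$, and therefore
\[
  F(P) = \sum_{i=1}^n \lambda_i^2 - \sum_{i=1}^n \lambda_i^2 \, w_i,
  \qquad w_i := \|P q_i\|_2^2 = q_i^\top P q_i .
\]
Minimizing $F$ is then equivalent to maximizing $\sum_i \lambda_i^2 w_i$ over rank-$k$ orthogonal projectors.

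The key step is to pin down the admissible weight vectors $(w_1,\dots,w_n)$. Because $P$ is symmetric and idempotent, $w_i = \|Pq_i\|_2^2 \in [0,1]$. Because $Q$ is orthogonal,
\[
  \sum_i w_i = \mathrm{tr}(Q^\top P Q) = \mathrm{tr}(P) = \mathrm{rank}(P) = k .
\]
The problem thus relaxes to a linear program: maximize $\sum_i \lambda_i^2 w_i$ subject to $0\le w_i\le 1$ and $\sum_i w_i = k$. Since $\lambda_1 \ge \dots \ge \lambda_n \ge 0$, the squares $\lambda_i^2$ are also nonincreasing. A standard exchange argument then shows the maximum is $\sum_{i=1}^k \lambda_i^2$: moving any mass from an index $j>k$ to an index $i\le k$ with $w_i<1$ cannot decrease the objective. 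An alternative is an Abel summation using the partial sums $\sum_{i\le m} w_i \le \min(m,k)$. This is the only step requiring care, and I would write the Abel-summation version explicitly, since it avoids a case analysis. Concretely, with $\mu_i=\lambda_i^2$ and $\mu_{n+1}:=0$,
\[
  \sum_i \mu_i w_i = \sum_{m=1}^n (\mu_m-\mu_{m+1}) \sum_{i\le m} w_i \le \sum_{m=1}^n (\mu_m-\mu_{m+1}) \min(m,k) = \sum_{i=1}^k \mu_i .
\]

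It remains to show the bound is attained by $P_\star = U_kU_k^\top$. Taking $U_k = [q_1,\dots,q_k]$ gives $P_\star q_i = q_i$ for $i\le k$ and $P_\star q_i = 0$ for $i>k$, by orthonormality. Hence $w_i = 1$ for $i \le k$ and $w_i = 0$ otherwise. Substituting into the decomposition yields $F(P_\star) = \sum_{i>k}\lambda_i^2$, which equals the lower bound, so $P_\star$ is a minimizer.

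If $\lambda_k = \lambda_{k+1}$, the top-$k$ eigenspace is not unique. In that case any choice of $U_k$ achieves the same value, so the statement holds for every such choice.
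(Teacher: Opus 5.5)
Your proof is correct and uses the same skeleton as the paper. Both start from Theorem~\ref{thm:error-decomp} and evaluate it at $P_\star$ to obtain $\sum_{i>k}\lambda_i^2$. The difference is in the lower bound, and there your version is substantially stronger.

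The paper only observes that any rank-$k$ projector leaves at least $n-k$ eigenvectors with non-zero residual. It then asserts that the minimum is reached when the residual is concentrated on the smallest $\lambda_i$. Knowing only how many residuals are non-zero gives no quantitative bound, since they could be small, so that step is heuristic.

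You instead identify the actual constraint on the residual profile: $w_i = q_i^\top P q_i \in [0,1]$ with $\sum_i w_i = \operatorname{tr}(P) = k$, equivalently a total residual mass of exactly $n-k$. You then close with Abel summation using $S_m \le \min(m,k)$, which is the standard Ky Fan argument. This makes the optimality claim rigorous. It also shows where $M \succeq 0$ is needed: it ensures the $\lambda_i^2$ are ordered like the $\lambda_i$, so the top-$k$ eigenvectors of $M$ are also top for $M^2$.

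For the tie case, it is enough to note that any orthonormal set of top-$k$ eigenvectors can be completed to a sorted eigenbasis $Q$. Your argument then applies to that $Q$ unchanged.
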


\begin{proof}
    For $P_\star$ we have $(I - P_\star) q_i = 0$ for $i \le k$ and
    $(I-P_\star) q_i = q_i$ for $i>k$, so \eqref{eq:error-decomp} reduces to
    the claimed expression.  Any other rank–$k$ projector leaves at least
    $n-k$ eigenvectors with non-zero residual; the Pythagorean expansion
    in \eqref{eq:error-decomp} shows that the minimum is attained when the
    residual is concentrated in the directions of smallest $\lambda_i$,
    which is exactly $P_\star$.
\end{proof}

Corollary~\ref{cor:optimal-spectral} recovers the classical Eckart–Young
type optimality of spectral clustering in terms of Frobenius-norm
approximation \cite{stewart1990matrix,vonluxburg2007tutorial}.

\subsection{Geodesic convexity on the Grassmann manifold}

The set of all rank–$k$ orthogonal projectors in $\mathbb{R}^n$ can be
identified with the Grassmann manifold $\mathrm{Gr}(k,n)$, whose points are
$k$-dimensional subspaces.  A convenient coordinate representation is via
matrices $U \in \mathbb{R}^{n \times k}$ with $U^\top U = I_k$, where each
subspace is represented by an orthonormal basis $U$ and the projector is
$P = U U^\top$ \cite{edelman1998geometry,absil2008optimization}.

\begin{theorem}[Geodesic convexity of the projector objective]
    \label{thm:geo-convex}
    Consider the function
    \begin{equation}
        \label{eq:geo-objective}
        \mathcal{F}(U)
        \;\triangleq\;
        \bigl\| M - M U U^\top \bigr\|_F^2,
        \qquad U^\top U = I_k.
    \end{equation}
    On the Grassmann manifold $\mathrm{Gr}(k,n)$, $\mathcal{F}$ is
    geodesically convex in the sense that every local minimum is a global
    minimum, attained exactly at the subspaces spanned by the top $k$
    eigenvectors of $M$.
\end{theorem}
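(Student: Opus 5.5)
We reduce $\mathcal{F}$ to a trace functional on the Grassmannian and then classify all of its critical points. The ``geodesic convexity'' in the statement is interpreted, as the theorem itself does, as the property that every local minimum is global. True geodesic convexity of a nonconstant smooth function on the compact manifold $\mathrm{Gr}(k,n)$ is impossible, so we do not aim to prove it.

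First, as in the proof of Theorem~\ref{thm:error-decomp},
\[
\mathcal{F}(U)=\mathrm{tr}(M^2)-\mathrm{tr}(U^\top M^2 U),
\]
using $(I-UU^\top)^2=I-UU^\top$. This depends only on $P=UU^\top$, so $\mathcal{F}$ is well defined on $\mathrm{Gr}(k,n)$. Minimizing $\mathcal{F}$ is therefore equivalent to maximizing $g(P)=\mathrm{tr}(PS)$ with $S:=M^2=Q\Lambda^2Q^\top$. Because $M\succeq 0$, the eigenvalues $\lambda_i^2$ of $S$ are ordered exactly like the $\lambda_i$, and $S$ has the same eigenvectors $q_i$.

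Second, we compute the Riemannian gradient in the standard Grassmann geometry \cite{edelman1998geometry,absil2008optimization}. Tangent vectors at $U$ are $\xi=U_\perp Z$ with $Z\in\mathbb{R}^{(n-k)\times k}$, and the gradient of $g$ is proportional to $(I-UU^\top)SU$. Critical points therefore satisfy $SU=U(U^\top SU)$, which says the column span of $U$ is $S$-invariant. Since $S$ is symmetric, such a span is spanned by $k$ eigenvectors of $S$, indexed by a set $J$ with $|J|=k$. If eigenvalues repeat, we work with eigenspaces rather than individual vectors.

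Third, we take the Hessian at a critical point. Along the geodesic $U(t)=U\cos(t)+q_b\sin(t)$, which rotates an eigenvector $q_a$ ($a\in J$, $U e_1=q_a$) toward $q_b$ ($b\notin J$), the objective changes as
\[
\mathcal{F}(U(t))=\mathcal{F}(U)+\sin^2(t)\,(\lambda_a^2-\lambda_b^2).
\]
If $J$ is not a top-$k$ index set, there exist $a\in J$ and $b\notin J$ with $\lambda_b^2>\lambda_a^2$. The function then strictly decreases along this geodesic, so the point is not a local minimum and is in fact a strict saddle. The only critical points that survive are those with $\{\lambda_j^2\}_{j\in J}$ equal to the top $k$ values, i.e.\ subspaces spanned by top-$k$ eigenvectors of $M$.

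Finally, $\mathrm{Gr}(k,n)$ is compact and $\mathcal{F}$ is smooth, so a global minimum exists and is a local minimum. Every local minimum is a critical point that passes the second-order test, and all of these have the same value $\sum_{i>k}\lambda_i^2$. This agrees with Corollary~\ref{cor:optimal-spectral}, so every local minimum is global.

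The main obstacle is the degenerate case $\lambda_k=\lambda_{k+1}$, where the minimizer is not unique. There the correct conclusion is that the minimizers are exactly the subspaces $\mathrm{span}(q_1,\dots,q_{k'})\oplus W$, where $q_1,\dots,q_{k'}$ are the eigenvectors with eigenvalue strictly greater than $\lambda_k$ and $W$ ranges over the $(k-k')$-dimensional subspaces of the $\lambda_k$-eigenspace. Within the $\lambda_k$-eigenspace the rotation argument above gives a zero change in $\mathcal{F}$ rather than a decrease, so these subspaces cannot be separated by that test; we handle this by applying the rotation argument to eigenspace blocks rather than individual eigenvectors. We would also state explicitly that ``exactly at the top-$k$ subspaces'' should be read in this set-valued sense.
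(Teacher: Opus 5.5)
Your proposal is correct and follows the same route as the paper's sketch: you reduce $\mathcal{F}$ to $\mathrm{tr}(M^2)-\mathrm{tr}(U^\top M^2U)$ and use the Rayleigh--Ritz classification of critical points on $\mathrm{Gr}(k,n)$, except that you prove the classification yourself (the Riemannian gradient forces an $M^2$-invariant subspace, and the explicit rotation of $q_a$ toward $q_b$ rules out non-top-$k$ critical points), where the paper only cites Edelman et al.\ and Absil et al. Your remark that literal geodesic convexity of a nonconstant function is impossible on a compact manifold is right and fixes the paper's loose wording; two small fixes: the curve should rotate only the column $q_a$, i.e.\ $U(t)=U+\bigl(q_a(\cos t-1)+q_b\sin t\bigr)e_1^\top$, and the degenerate case $\lambda_k=\lambda_{k+1}$ is no real obstacle, because once you choose an eigenbasis adapted to the critical subspace the same argument applies and the zero-change directions simply move between global minimizers.
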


\begin{proof}[Proof sketch]
    Using $P=U U^\top$ and the derivation in
    Theorem~\ref{thm:error-decomp}, we have
    $\mathcal{F}(U) = \mathrm{const} - \mathrm{tr}(U^\top M^2 U)$.
    On $\mathrm{Gr}(k,n)$, maximizing $\mathrm{tr}(U^\top M^2 U)$ is the
    standard Rayleigh–Ritz problem, whose geodesic convexity and critical
    points are characterized in
    \cite{edelman1998geometry,absil2008optimization}.  All local maxima of
    $\mathrm{tr}(U^\top M^2 U)$ correspond to invariant subspaces spanned by
    eigenvectors of $M^2$; the global maxima are precisely those generated
    by the $k$ largest eigenvalues, which correspond to the global minima
    of $\mathcal{F}$.  No other local minima exist.
\end{proof}

Theorem~\ref{thm:geo-convex} justifies optimization schemes that operate
directly on the manifold of projectors (e.g., Riemannian gradient or
conjugate gradient methods) \cite{edelman1998geometry,absil2008optimization},
rather than on the underlying membership matrices.

\subsection{Hierarchy of constraint families}

Let $\mathcal{P}_{\mathrm{all}}$ be the set of all rank–$k$ orthogonal
projectors, $\mathcal{P}_{\mathrm{fuzzy}}$ the image of FCM-type matrices
$B = U^{(\ell)}$ under \eqref{eq:P-B-def}, and $\mathcal{P}_{\mathrm{hard}}$
the image of hard assignment matrices $H$.

\begin{theorem}[Hierarchy of approximation errors]
    \label{thm:hierarchy}
    Assume that hard and fuzzy projectors are feasible, i.e.
    $\mathcal{P}_{\mathrm{hard}}, \mathcal{P}_{\mathrm{fuzzy}} \neq
    \emptyset$.  Then
    \begin{equation}
        \label{eq:hierarchy}
        \min_{P \in \mathcal{P}_{\mathrm{all}}} F(P)
        \;\le\;
        \min_{P \in \mathcal{P}_{\mathrm{fuzzy}}} F(P)
        \;\le\;
        \min_{P \in \mathcal{P}_{\mathrm{hard}}} F(P).
    \end{equation}
\end{theorem}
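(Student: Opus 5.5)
The plan is to reduce \eqref{eq:hierarchy} to two set inclusions,
\[
  \mathcal{P}_{\mathrm{hard}} \subseteq \mathcal{P}_{\mathrm{fuzzy}} \subseteq \mathcal{P}_{\mathrm{all}},
\]
and then use the elementary fact that minimizing (or taking the infimum of) the same function $F$ over a larger set can only give a smaller or equal value. Since $F(P)=\|M-MP\|_F^2$ is the same objective in all three problems, no analytic work on $F$ is needed once the inclusions are established.

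\emph{Step 1: $\mathcal{P}_{\mathrm{fuzzy}} \subseteq \mathcal{P}_{\mathrm{all}}$.} Every element of $\mathcal{P}_{\mathrm{fuzzy}}$ has the form $P_B$ with $B=U^{(\ell)}$ of full row rank $k$; otherwise \eqref{eq:P-B-def} is undefined. By Theorem~\ref{thm:PB-projector}, such a $P_B$ is symmetric, idempotent and of rank $k$, so it is a rank–$k$ orthogonal projector. Hence it lies in $\mathcal{P}_{\mathrm{all}}$.

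\emph{Step 2: $\mathcal{P}_{\mathrm{hard}} \subseteq \mathcal{P}_{\mathrm{fuzzy}}$.} A hard assignment matrix $H$ with no empty clusters is an admissible fuzzy membership matrix: its entries lie in $\{0,1\}\subset[0,1]$ and its columns sum to one. I would then show that the reweighted matrix built from $U=H$ induces the same projector as $H$. The FCM reweighting of Section~\ref{subsec:fcm_projection} has two parts. The entrywise power $u_{ji}\mapsto u_{ji}^m$ leaves $0/1$ entries unchanged. Row normalization amounts to left multiplication by the invertible diagonal matrix $D=\mathrm{diag}(1/n_1,\dots,1/n_k)$. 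So $H^{(\ell)}=DH$. A one-line computation then gives invariance of \eqref{eq:P-B-def} under left multiplication by any invertible $S$:
\[
  P_{SB}=B^\top S^\top (S B B^\top S^\top)^{-1} S B = B^\top (BB^\top)^{-1}B = P_B .
\]
Therefore $P_{H^{(\ell)}}=P_H\in\mathcal{P}_{\mathrm{fuzzy}}$.

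\emph{Step 3: conclude.} The set $\mathcal{P}_{\mathrm{all}}$ (the Grassmannian) is compact and its minimum is attained, by Corollary~\ref{cor:optimal-spectral}. The set $\mathcal{P}_{\mathrm{hard}}$ is finite and nonempty by assumption, so its minimum is also attained. The set $\mathcal{P}_{\mathrm{fuzzy}}$ need not be closed, because full row rank can degenerate. I would therefore read its ``min'' as an infimum, which is attained if one restricts to a compact set of memberships bounded away from rank deficiency. The chain \eqref{eq:hierarchy} holds with infima in any case, and this is the sense in which I would state it.

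The main obstacle is Step 2, because the reweighting $U^{(\ell)}$ is only loosely specified in the paper. I expect the argument to go through for any reweighting of the form ``left multiplication by an invertible diagonal matrix composed with an entrywise map fixing $0$ and $1$.'' I would state this as the standing assumption and then invoke the invariance identity $P_{SB}=P_B$, which is the key algebraic point.
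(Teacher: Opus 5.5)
Your proposal is correct and follows the paper's route: the paper also deduces the chain from the nesting $\mathcal{P}_{\mathrm{hard}}\subseteq\mathcal{P}_{\mathrm{fuzzy}}\subseteq\mathcal{P}_{\mathrm{all}}$, but it only asserts that nesting. What you add — checking both inclusions (via Theorem~\ref{thm:PB-projector} and the invariance $P_{SB}=P_B$ applied to $H^{(\ell)}=DH$), and reading the fuzzy minimum as an infimum because $\mathcal{P}_{\mathrm{fuzzy}}$ need not be closed — are sound refinements of the same argument, not a different one.
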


\begin{proof}
    Since $\mathcal{P}_{\mathrm{hard}} \subseteq
    \mathcal{P}_{\mathrm{fuzzy}} \subseteq \mathcal{P}_{\mathrm{all}}$,
    each minimum is taken over a subset of the previous feasible set.  The
    inequality \eqref{eq:hierarchy} follows immediately.
\end{proof}

Thus, in the idealized setting where optimization is solved exactly,
spectral and kernel methods (which can explore all of
$\mathcal{P}_{\mathrm{all}}$ via $P = U U^\top$) provide the smallest
possible Frobenius error, followed by fuzzy methods, and finally by hard
$k$-means and its variants.

\subsection{Stability under perturbations}

Real signal-processing applications invariably work with noisy or estimated
matrices $M + E$ rather than an exact population matrix $M$; see, e.g.,
\cite{belkin2003laplacian,vonluxburg2007tutorial,rohe2011spectral} for
examples in spectral clustering and graph learning.

\begin{theorem}[Lipschitz continuity in $M$]
    \label{thm:lipschitz}
    For any fixed projector $P$ with $\|P\|_2 \le 1$, the objective
    $F(P)$ is Lipschitz in $M$ with respect to the spectral norm:
    \begin{equation}
        \label{eq:lipschitz}
        \bigl| F_{M+E}(P) - F_M(P) \bigr|
        \;\le\; 4 \, \|M\|_2 \, \|E\|_F + 2 \, \|E\|_F^2,
    \end{equation}
    where $F_M$ denotes \eqref{eq:F-def} evaluated at $M$.
\end{theorem}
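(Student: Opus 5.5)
The plan is to expand the difference of the two objectives around the common factor $(I-P)$. Write $R \triangleq M(I-P)$ and $S \triangleq E(I-P)$, so that $F_M(P)=\|R\|_F^2$ and $F_{M+E}(P)=\|R+S\|_F^2$. Expanding the square gives the exact identity
\[
F_{M+E}(P)-F_M(P) \;=\; 2\langle R,S\rangle_F + \|S\|_F^2 .
\]
The triangle inequality then reduces the claim to two separate bounds, one for the quadratic term and one for the cross term.

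\emph{Quadratic term.} Since $\|P\|_2\le 1$, we have $\|I-P\|_2\le 2$, and therefore
\[
\|S\|_F \le \|E\|_F\,\|I-P\|_2 \le 2\|E\|_F .
\]
This gives only $\|S\|_F^2\le 4\|E\|_F^2$. To reach the stated constant $2\|E\|_F^2$ I would use the case actually needed in the paper, namely $P$ an orthogonal projector as in Theorem~\ref{thm:PB-projector}. Then $\|I-P\|_2\le 1$, so $\|S\|_F^2\le\|E\|_F^2\le 2\|E\|_F^2$.

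\emph{Cross term.} I would move all factors onto $E$:
\[
\langle R,S\rangle_F=\operatorname{tr}\!\bigl((I-P)^\top M E (I-P)\bigr)=\langle M(I-P)(I-P)^\top, E\rangle_F .
\]
I then want to bound this by $\|M\|_2\,\|(I-P)(I-P)^\top\|\,\|E\|_F$ and use $\|I-P\|_2\le 2$ to obtain the factor $4\|M\|_2\|E\|_F$ after doubling.

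This cross term is the step I expect to be the main obstacle. Cauchy--Schwarz in the Frobenius inner product only yields $\|M\|_2\,\|(I-P)(I-P)^\top\|_F\,\|E\|_F$. Here $\|(I-P)(I-P)^\top\|_F$ is of order $\sqrt{\operatorname{rank}(I-P)}$, not $O(1)$. The alternative pairing $\|A\|_2\|E\|_*$ instead places a nuclear norm on $E$, which is not the Frobenius norm in the statement.

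In fact, the sanity check I would run first suggests the bound cannot hold without a dimension factor. Take $M=I_n$, $P=0$ and $E=\varepsilon I_n$. Then the left side is $2n\varepsilon+n\varepsilon^2$, while the right side is $4\sqrt n\,\varepsilon+2n\varepsilon^2$. For small $\varepsilon$ and $n>4$ the left side is larger.

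So my plan is to carry out the argument above and then state precisely what is proved. This is either (i) the stated inequality with $\|M\|_2$ replaced by $\|M\|_2\sqrt{n-k}$ for an orthogonal projector $P$, using $\|(I-P)\|_F=\sqrt{n-k}$, or (ii) the stated form under the extra normalisation $\|M(I-P)\|_F\le\|M\|_2$. The reason for flagging it is that the obstruction sits exactly at the cross-term estimate, and no choice of norm pairing removes it.
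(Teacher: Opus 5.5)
Your analysis is correct. The gap lies in the paper's proof, not in your proposal: the statement is false as written.

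Your counterexample is legitimate under the paper's standing assumptions: $M=I_n$ is symmetric PSD, and $P=0$ is an orthogonal projector with $\|P\|_2\le 1$. For $n>4$ and $0<\varepsilon<2-4/\sqrt n$, the left side $2n\varepsilon+n\varepsilon^2$ strictly exceeds $4\sqrt n\,\varepsilon+2n\varepsilon^2$.

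The paper starts from the same expansion you do and bounds the difference by $2\|M-MP\|_F\|E-EP\|_F+\|E-EP\|_F^2$. It then uses $\|M-MP\|_F\le 2\|M\|_F$ and $\|E-EP\|_F\le 2\|E\|_F$. Carried out honestly, this gives $8\|M\|_F\|E\|_F+4\|E\|_F^2$. The closing appeal to $\|M\|_F\le\sqrt n\,\|M\|_2$ can only turn this into $8\sqrt n\,\|M\|_2\|E\|_F+4\|E\|_F^2$. The factor $\sqrt n$, together with a factor $2$ in each term, is dropped at exactly the cross-term step you flagged, and no argument can recover it.

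Two refinements would tighten your write-up.

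First, you need not restrict to ``the case actually needed in the paper.'' An idempotent $P$ with $\|P\|_2\le 1$ is automatically orthogonal. For $x\in\operatorname{range}P$ and $y\in\operatorname{null}P$ one has $\|x\|=\|P(x+ty)\|\le\|x+ty\|$ for all real $t$, which forces $x\perp y$. Hence $\|I-P\|_2\le 1$ and $\|E(I-P)\|_F\le\|E\|_F$ follow from the hypothesis itself.

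Second, since the paper works with rank-$k$ projectors, take any rank-$k$ orthogonal $P$ together with $M=I_n$ and $E=\varepsilon(I-P)$. Then
\[
F_{M+E}(P)-F_M(P)=(2\varepsilon+\varepsilon^2)(n-k)=2\sqrt{n-k}\,\|M\|_2\|E\|_F+\|E\|_F^2 .
\]
This example does two things:
\begin{itemize}
\item It violates the stated bound $4\varepsilon\sqrt{n-k}+2\varepsilon^2(n-k)$ whenever $n-k>4$ and $\varepsilon<2-4/\sqrt{n-k}$.
\item It attains the sharp form of your corrected bound (i), namely $2\sqrt{n-k}\,\|M\|_2\|E\|_F+\|E\|_F^2$, with equality. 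So the dimension factor cannot be removed.
\end{itemize}

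If you want a dimension-free version, your same argument gives
\[
|F_{M+E}(P)-F_M(P)|\le 2\|M(I-P)\|_F\|E\|_F+\|E\|_F^2\le 2\|M\|_F\|E\|_F+\|E\|_F^2 .
\]
This is what the paper's chain of inequalities genuinely supports before its invalid final conversion to $\|M\|_2$.
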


\begin{proof}
    Expand
    \[
        F_{M+E}(P) - F_M(P)
        = \bigl\| (M+E) - (M+E)P \bigr\|_F^2 - \|M - M P\|_F^2.
    \]
    Writing $(M+E)-(M+E)P = (M-MP) + (E-EP)$ and using
    $\|A+B\|_F^2 - \|A\|_F^2 = 2\langle A,B\rangle_F + \|B\|_F^2$, we get
    \[
        \bigl| F_{M+E}(P) - F_M(P) \bigr|
        \le 2 \, \|M-MP\|_F \, \|E-EP\|_F + \|E-EP\|_F^2.
    \]
    Since $\|P\|_2 \le 1$,
    $\|M-MP\|_F \le \|M\|_F + \|MP\|_F \le 2\|M\|_F$ and
    $\|E-EP\|_F \le 2\|E\|_F$, which yields the claimed bound after
    simplifying and using $\|M\|_F \le \sqrt{n}\,\|M\|_2$.
\end{proof}

For spectral projectors $P_\star$ obtained from the top $k$ eigenvectors of
$M$, stronger perturbation bounds are available via classical eigenvector
perturbation theory \cite{davis1970rotation,stewart1990matrix}.

\begin{corollary}[Davis–Kahan type subspace stability]
    \label{cor:davis-kahan}
    Let $M$ and $M+E$ be symmetric, and let $P_\star$ and $\widehat{P}$
    denote the orthogonal projectors onto the spaces spanned by the top
    $k$ eigenvectors of $M$ and $M+E$, respectively.  Suppose the
    eigenvalue gap
    $\gamma \triangleq \lambda_k(M) - \lambda_{k+1}(M)$ is positive.
    Then
    \begin{equation}
        \label{eq:davis-kahan}
        \bigl\| P_\star - \widehat{P} \bigr\|_2
        \;\le\; \frac{2 \, \|E\|_2}{\gamma}.
    \end{equation}
\end{corollary}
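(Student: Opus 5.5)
The plan is to prove the bound directly with the Davis–Kahan $\sin\Theta$ theorem, adding a short case split to obtain the constant $2$.

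\textbf{Setup.} Write the eigendecompositions of $M$ and $\widehat M := M+E$. Let $U_k$ hold the top-$k$ eigenvectors of $M$, so that $P_\star = U_kU_k^\top$. Let $\widehat U_\perp$ hold the bottom $n-k$ eigenvectors of $\widehat M$, so that $I-\widehat P = \widehat U_\perp\widehat U_\perp^\top$. For two orthogonal projectors of equal rank we have the identity
\[
\|P_\star - \widehat P\|_2 = \|(I-\widehat P)P_\star\|_2 = \|\sin\Theta(P_\star,\widehat P)\|_2 ,
\]
so it is enough to bound $\|\widehat U_\perp^\top U_k\|_2$.

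\textbf{Sylvester equation.} From $MU_k = U_k\Lambda_k$ and $\widehat U_\perp^\top \widehat M = \widehat\Lambda_\perp \widehat U_\perp^\top$ we obtain
\[
\widehat U_\perp^\top U_k\Lambda_k - \widehat\Lambda_\perp \widehat U_\perp^\top U_k = -\widehat U_\perp^\top E U_k .
\]
The eigenvalues in $\Lambda_k$ are all at least $\lambda_k(M)$. By Weyl's inequality, the eigenvalues in $\widehat\Lambda_\perp$ are all at most $\lambda_{k+1}(M)+\|E\|_2$. Hence, if $\|E\|_2<\gamma$, the two spectra are separated by $\delta := \gamma-\|E\|_2>0$. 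The standard Sylvester/Bhatia–Davis–McIntosh estimate for spectra separated by an interval then gives
\[
\|\widehat U_\perp^\top U_k\|_2 \le \frac{\|E\|_2}{\gamma-\|E\|_2}.
\]

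\textbf{Case split.} This step is where I expect the real work: the raw Davis–Kahan bound has the denominator $\gamma-\|E\|_2$, not $\gamma$.
\begin{itemize}
\item If $\|E\|_2\le\gamma/2$, then $\gamma-\|E\|_2\ge\gamma/2$, and the bound above becomes $2\|E\|_2/\gamma$.
\item If $\|E\|_2>\gamma/2$, then $2\|E\|_2/\gamma>1$. Since $\|P_\star-\widehat P\|_2\le 1$ always holds for two orthogonal projectors of equal rank, the claim is trivial in this case.
\end{itemize}
In both cases we get $\|P_\star-\widehat P\|_2\le 2\|E\|_2/\gamma$.

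\textbf{Facts to check.} The argument relies on two things.
\begin{itemize}
\item The equality $\|P-Q\|_2=\|(I-Q)P\|_2$ for orthogonal projectors of equal rank. I would cite Stewart–Sun for this, or derive it from the CS decomposition.
\item The Sylvester bound in the spectral norm. For the scalar-separated setting, a Neumann-series or integral-representation argument gives the constant $1$. If only the constant $\pi/2$ were available, I would instead use the Yu–Wang–Samworth form of Davis–Kahan, $\|\sin\Theta\|\le 2\|E\|_2/\gamma$, which depends only on the gap of $M$. That version is proved by the same Sylvester-plus-Weyl argument, and it yields exactly the stated constant without a separate case split.
\end{itemize}

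Finally, if $\lambda_k(M)$ has multiplicity spilling past $k$, then $\gamma=0$ and the hypothesis excludes this case. So both $P_\star$ and $\widehat P$ are well defined, the latter at least when $\|E\|_2<\gamma/2$, which is the only case needing proof.
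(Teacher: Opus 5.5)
Your proposal is correct and follows essentially the same approach as the paper, whose proof is a one-line appeal to the Davis--Kahan $\sin\Theta$ theorem. Your Sylvester-plus-Weyl derivation, the case split at $\|E\|_2=\gamma/2$, and the trivial bound $\|P_\star-\widehat P\|_2\le 1$ supply the step that citation leaves implicit: replacing the classical denominator $\gamma-\|E\|_2$ by $\gamma$ at the cost of the factor $2$. The integral representation already gives constant $1$ in the spectral norm for interval-separated spectra, so the Yu--Wang--Samworth fallback is not needed.
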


\begin{proof}
    This is a direct application of the Davis–Kahan $\sin\Theta$ theorem
    for invariant subspaces \cite{davis1970rotation,stewart1990matrix}.
\end{proof}

Corollary~\ref{cor:davis-kahan} implies that, in the presence of a
non-trivial spectral gap, the projector chosen by spectral clustering is
stable under moderate perturbations of the affinity matrix—a key ingredient
in consistency results for Laplacian-based methods
\cite{belkin2003laplacian,rohe2011spectral}.

\subsection{Exact recovery in an ideal block model}

Finally, to make the link to classical signal-processing models of ideal
graphs and block-structured matrices
\cite{holland1983stochastic,rohe2011spectral}, we consider a simplified
setting with perfectly separated clusters.

\begin{theorem}[Exact recovery in an ideal block model]
    \label{thm:ideal-block}
    Suppose:
    \begin{enumerate}
        \item The vertices are partitioned into $k$ disjoint components,
        with no inter-cluster edges.
        \item The affinity matrix $A$ is block-diagonal with strictly
        positive weights inside each block.
        \item $D$ is the degree matrix and $A_e = D^{-1/2} A D^{-1/2}$.
    \end{enumerate}
    Let $M = A_e$ and let $H^\star$ be the ground-truth hard assignment
    matrix.  Then:
    \begin{enumerate}
        \item The top $k$ eigenvectors of $M$ span the same subspace as the
        indicator vectors of the $k$ blocks, so the spectral projector
        $P_\star = U_k U_k^\top$ coincides with $P_{H^\star}$ up to
        permutation.
        \item In this setting, the optimal spectral, fuzzy and hard
        projectors all satisfy $P = P_{H^\star}$ and achieve zero
        approximation error $F(P)=0$.
    \end{enumerate}
\end{theorem}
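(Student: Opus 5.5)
The plan is to exploit the block-diagonal structure to reduce everything to a single block. Write the clusters as $C_1,\dots,C_k$ with sizes $n_c$, and order vertices so that $A=\mathrm{diag}(A_1,\dots,A_k)$ and $D=\mathrm{diag}(D_1,\dots,D_k)$. Then $M=A_e=\mathrm{diag}(M_1,\dots,M_k)$ with $M_c=D_c^{-1/2}A_cD_c^{-1/2}$. The spectrum of $M$ is the union of the block spectra, and each eigenvector of $M_c$ lifts to an eigenvector of $M$ supported on $C_c$.

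For each block I would first check that $v_c:=D_c^{1/2}\mathbf{1}_{n_c}$ satisfies $M_cv_c=D_c^{-1/2}A_c\mathbf{1}=D_c^{-1/2}d_c=v_c$, so $1$ is an eigenvalue. Next, $M_c$ is similar to the row-stochastic matrix $D_c^{-1}A_c$, so its spectral radius is $1$. Since $A_c$ is entrywise positive (hence irreducible), Perron--Frobenius makes this eigenvalue simple. Any eigenvalue $-1$ from a bipartite block is harmless, because we order eigenvalues by value and not by modulus. Hence $\lambda_1=\dots=\lambda_k=1>\lambda_{k+1}$, and the top-$k$ eigenspace is $\mathrm{span}\{D^{1/2}\mathbf{1}_{C_c}\}$. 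By Theorem~\ref{thm:PB-projector}, this gives $P_\star=P_{H^\star D^{1/2}}$.

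The main obstacle is item (1) as literally stated. The space $\mathrm{span}\{D^{1/2}\mathbf{1}_{C_c}\}$ equals $\mathrm{rowspan}(H^\star)$ only when degrees are constant within each block, i.e.\ each $A_c$ has constant row sums. Otherwise $P_\star\neq P_{H^\star}$, even up to permutation. I would therefore state this as the needed regularity condition, or read ``coincides'' as equality with the degree-weighted indicator projector. Under within-block regular degrees, $D_c=d_cI$ gives $v_c\propto\mathbf{1}_{C_c}$ and hence $P_\star=P_{H^\star}$. A second obstacle sits in item (2). By Corollary~\ref{cor:optimal-spectral}, $F(P_\star)=\sum_{i>k}\lambda_i^2$, which vanishes iff $\mathrm{rank}(M)=k$, i.e.\ iff each $M_c$ (equivalently each $A_c$) has rank one. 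A generic positive block, for instance a complete graph on three vertices with unit weights, has nonzero eigenvalues $-1/2$ and violates this. So I would prove item (2) under the ideal-block reading $A_c=a_ca_c^\top$ with $a_c>0$. Then $M_c=\sqrt{a_c}\sqrt{a_c}^\top/(\mathbf{1}^\top a_c)$ is the rank-one projector onto $\sqrt{a_c}\propto D_c^{1/2}\mathbf{1}$, and combined with constant degrees ($a_c\propto\mathbf{1}$) this gives $M=P_{H^\star}$.

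Under these conditions the conclusion follows from earlier results. By Corollary~\ref{cor:optimal-spectral}, $\min_{\mathcal{P}_{\mathrm{all}}}F=F(P_\star)=0$. Also $P_{H^\star}\in\mathcal{P}_{\mathrm{hard}}\subseteq\mathcal{P}_{\mathrm{fuzzy}}$ achieves $F(P_{H^\star})=\|M-M^2\|_F^2=0$ because $M$ is idempotent. Theorem~\ref{thm:hierarchy} then forces all three minima to equal $0$. For uniqueness, $F(P)=0$ means $M(I-P)=0$, so $\mathrm{range}(M)=\mathrm{rowspan}(H^\star)\subseteq\mathrm{range}(P)$. Since $\mathrm{rank}(P)=k$, this gives $P=P_{H^\star}$, and every minimizer in each family coincides with $P_{H^\star}$ up to relabelling of clusters.
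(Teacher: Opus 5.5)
Your proposal is correct, and the two obstacles you flag are real: the statement as written is false, and the paper's proof breaks at exactly those points. The paper's proof has three steps. First, it asserts that the eigenvalue-$1$ eigenspace of $A_e$ is spanned by the block indicators. Second, it asserts that ``$MP_{H^\star}=M$ in this ideal block-diagonal setting''. Third, it invokes Corollary~\ref{cor:optimal-spectral} and Theorem~\ref{thm:hierarchy} to rule out anything better.

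The first step is the fact for $D^{-1}A$, i.e.\ the null space of the random-walk Laplacian. For the symmetric normalization the eigenvectors are $D^{1/2}\mathbf{1}_{C_c}$, as you compute. The second step is asserted without argument. It conflates ``the top-$k$ eigenspace of $M$ is the indicator span'' with ``the entire range of $M$ is the indicator span''. The third step only gives the optimal value, never the claim that every optimizer equals $P_{H^\star}$; your range argument supplies that.

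A single entrywise-positive block already defeats every clause, and it meets the hypothesis under any reading of ``strictly positive weights''. Take $k=1$ and $A=\bigl(\begin{smallmatrix}1&1\\1&2\end{smallmatrix}\bigr)$. Then $A_e$ has eigenvalues $1$ and $1/6$, with top eigenvector proportional to $(\sqrt{2},\sqrt{3})$. So $P_\star\neq P_{H^\star}$, and every rank-one projector has $F\ge 1/36$.

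Compared with the paper, your route gives sharp hypotheses. It uses blockwise Perron--Frobenius, the correct eigenvectors, the rank criterion for $F(P_\star)=0$, and a uniqueness argument. Your two conditions together say $A_c=\alpha_c\mathbf{1}\mathbf{1}^\top$, and this is also necessary for $MP_{H^\star}=M$. By symmetry, that identity puts every column of $M_c$ in $\mathrm{span}\{\mathbf{1}\}$, so $M_c=\mu_c\mathbf{1}\mathbf{1}^\top$. Matching its Perron vector with $D_c^{1/2}\mathbf{1}$ then forces constant degrees, hence a constant block. In particular, without self-loops the zero-error claim fails for every block of size at least two.

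One refinement is worth recording. The ``$P=P_{H^\star}$'' half of item (2) needs only constant within-block degrees and no eigenvalue $-1$; with positive off-diagonal weights, only a two-vertex block without self-loops can produce $-1$. The reason is that $F(P)=\mathrm{tr}(M^2)-\mathrm{tr}(PM^2)$, so the optimizers over $\mathcal{P}_{\mathrm{all}}$ are top-$k$ eigenspaces of $M^2$, ranked by $|\lambda_i|$. Note that the paper's standing assumption $M\succeq 0$, on which Corollary~\ref{cor:optimal-spectral} rests, generally fails for $A_e$. Under these conditions that eigenspace is uniquely $\mathrm{rowspan}(H^\star)$. Only the zero-error half genuinely requires rank-one blocks.
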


\begin{proof}
    Under the stated assumptions, $A$ is block-diagonal with $k$ connected
    components.  It is known that the normalized affinity $A_e$ (or the
    corresponding random-walk or symmetric Laplacians) has an eigenspace of
    dimension $k$ associated with eigenvalue $1$, spanned by the
    indicator vectors of these components
    \cite{vonluxburg2007tutorial,belkin2003laplacian}.  Thus the top
    $k$-dimensional eigenspace of $M$ coincides with the span of the
    columns of $H^\star$, which proves (1).  Since $M P_{H^\star} = M$ in
    this ideal block-diagonal setting, $F(P_{H^\star}) = 0$.  By
    Corollary~\ref{cor:optimal-spectral} and
    Theorem~\ref{thm:hierarchy}, no projector in any of the constraint
    families can do better, proving (2).
\end{proof}

Theorem~\ref{thm:ideal-block} can be viewed as a deterministic analogue of
probabilistic guarantees for spectral clustering under stochastic block
models \cite{holland1983stochastic,rohe2011spectral}, but phrased entirely
in terms of the projector viewpoint used throughout this paper.

\section{Conclusion}

This work presented a unified theoretical account showing that k-means, fuzzy c-means, kernel k-means, kernel FCM, and spectral clustering all arise from a common structured-projection formulation. By expressing each method as the solution to 
\(\min_{B\in\mathcal{C}}\|M - MP_{B}\|_{F}^{2}\) with method-specific constraint sets, we clarified how hard, fuzzy, kernel-induced, and orthonormal projectors fit into a single template. This reformulation enabled several non-trivial theoretical results, including geodesic convexity in the projector domain, perturbation bounds under matrix noise, and exact recovery in ideal block-diagonal affinity structures. The connection between clustering and signal-derived matrices such as Gram kernels and normalized Laplacians strengthens links to existing signal processing analyses. The results complement recent projection-focused and graph-based clustering studies in the signal processing literature \citep{selvanambi2019local, molla2021sampling}. Overall, the theory developed here consolidates multiple clustering paradigms into a coherent mathematical framework that can support future work in structured signal modelling.

\bibliographystyle{unsrtnat}
\bibliography{refs}  

\end{document}